\newcommand{\Desc}[2]{\State \makebox[2em][l]{#1}#2}
\algnewcommand\algorithmicforeach{\textbf{for each}}
\theoremstyle{remark}
\theoremstyle{definition}
\newtheorem{defn}{Definition}
\newtheorem{ass}{Assumption}
\theoremstyle{plain}
\newtheorem{thm}{Theorem}
\newtheorem{prop}{Proposition}  
\newtheorem{example}{Example}
\newcommand{\R}{\mathbb{R}}
\renewcommand{\S}{\mathit{S}}
\newcommand{\norm}[1]{\left\lVert#1\right\rVert}
\title{\LARGE \bf
Safety-Critical Manipulation for Collision-Free Food Preparation 
}
\author{Andrew Singletary, William Guffey, Tamas G. Molnar, Ryan Sinnet, and Aaron D. Ames
\thanks{
Andrew Singletary, Tamas G. Molnar and Aaron D. Ames are with Department of Mechanical and Civil Engineering, California Institute of Technology, Pasadena CA 91125, USA. Email addresses:
		{\tt \small \{asinglet, tmolnar, ames\}@caltech.edu}. 
}
\thanks{
William Guffey and Ryan Sinnet are with Miso Robotics, Pasadena, CA 91101, USA. Email addresses:
		{\tt \small \{wguffey, rsinnet\}@misorobotics.com}.
}
\thanks{
This work is supported by Miso Robotics and NSF CPS award \#1932091.
}
}
\begin{document}

\maketitle
\thispagestyle{empty}
\pagestyle{empty}

\begin{abstract}
Recent advances allow for the automation of food preparation in high-throughput environments, yet the successful deployment of these robots requires the planning and execution of quick, robust, and ultimately collision-free behaviors. In this work, we showcase a novel framework for modifying previously generated trajectories of robotic manipulators in highly detailed and dynamic collision environments using Control Barrier Functions (CBFs). This method dynamically re-plans previously validated behaviors in the presence of changing environments---and does so in a computationally efficient manner. Moreover, the approach provides rigorous safety guarantees of the resulting trajectories, factoring in the true underlying dynamics of the manipulator. This methodology is extensively validated on a full-scale robotic manipulator in a real-world cooking environment, and has resulted in substantial improvements in computation time and robustness over re-planning.
\end{abstract}

\section{Introduction}
\label{sec:introduction}
Robotics and automation
have great potential to transform the food industry.
In the domain of autonomous cooking, robotic manipulators are used to pick up, deep fry, and dispense the food in the dynamic environment of the kitchen.
This requires motion plans that are constantly computed, hundreds or thousands of times per day, subject to different environmental factors and initial conditions of the robots. Due to the extremely complex collision environments and non-trivial kinematics, highly non-linear planning algorithms such as TrajOpt \cite{schulman2014motion}, OMPL \cite{sucan2012open}, and CHOMP \cite{ratliff2009chomp} are used to plan joint trajectories offline, which the manipulator then executes. The vast majority of plans, however, deviates only slightly from previously computed trajectories: food baskets may shift locations and deform slightly, workers may push the equipment, or the robot may have slightly different joint configuration initially. In these situations, rather than re-planning a trajectory with the existing motion planner, we propose a safety filtering method that produces collision-free trajectories from existing reference trajectories in minimal computation time, and with formal safety guarantees.

Minimally modifying existing trajectories 
is possible by optimization solvers that have warm-start or hot-start options for resolving problems with slightly modified initial conditions.
In \cite{lembono2020memory}, the authors introduced a method for building a dataset of motion plans that were used to warm-start the trajectory generator to boost the success-rate of trajectories. Similarly, in \cite{banerjee2020learning}, the authors proposed a dataset of expert trajectories to warm-start a Sequential Convex Programming (SCP) problem for solving locally optimal trajectories rapidly. In \cite{dong2016motion}, the authors used incremental solvers to update trajectories via Gaussian processes and factor graphs. 

More generally, local planners have been used for decades to modify rough, global trajectories under new collision constraints \cite{baginski1996local} or dynamic environments \cite{terasawa2016achievement}. While many of these works could certainly be modified to tackle the robotic cooking problem, we believe that our approach's balance of simplicity, computational speed, and formality of resulting safety guarantees makes it the best fit for the problem at hand. Moreover, this algorithm can be run in real-time as a feedback controller with dynamically updating environments, offering a great deal of flexibility in implementation.

Our approach relies on control barrier functions (CBFs) \cite{ames2017cbf}, that have been proven to provide an effective means of enforcing safety on a wide variety of robotic systems \cite{ames2019control}, including robotic manipulators \cite{landi2019safety,singletary2019online,singletary2021safety}.
In prior works, CBFs were utilized as safety filters on desired velocity commands, and obstacle representations were simplified.
In this work, safe velocity commands synthesized based on kinematics are tracked by low-level controllers, and a formal proof is provided that this method preserves safety for the full dynamics of the robot.
The formal connection between safe kinematics and dynamics leads to a theoretically justified framework for filtering pre-computed trajectories.
Furthermore, our work utilizes significantly more complex obstacle representations and environments than previous works involving CBFs, which facilitates practical implementation.

\begin{figure}[t]
\includegraphics[width=\columnwidth]{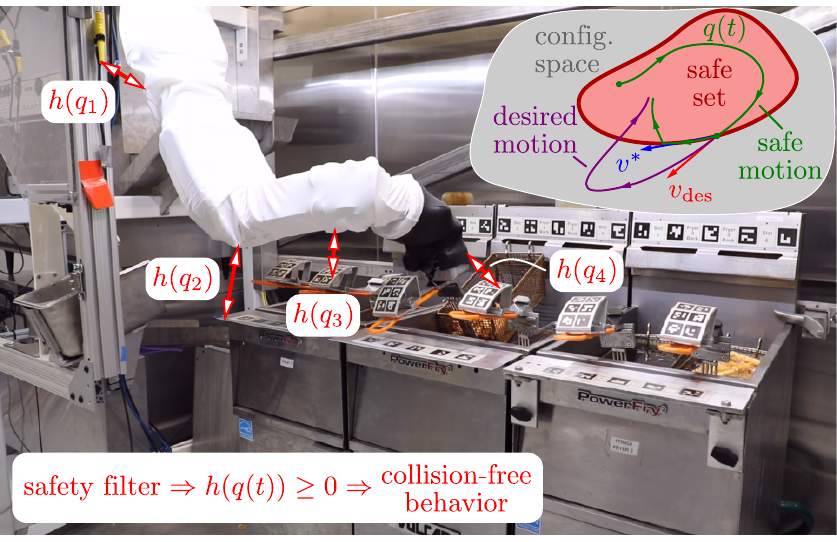}
\caption{Miso Robotics "Flippy2" robot frying food using our proposed safety-critical framework for food preparation.}
\vspace{-.08cm}
\end{figure}


The primary contribution of this work is a rigorously tested CBF-based filtering strategy that modifies previously generated trajectories to account for new collision constraints in a provably safe manner. 
This strategy often eliminates the need for re-planning in updated environments, saving computation time and providing robust safety guarantees for the resulting trajectory. We formally prove that these trajectories are not only valid for the kinematic model of the manipulator, but also for the underlying full-order dynamical system.
The proposed novel control algorithm is implemented in the MoveIt framework \cite{coleman2014reducing}, and applied to full-scale autonomous food-frying in collaboration with Miso Robotics.
The speed and efficacy of this method are extensively explored in real-world cooking environments, and the method has been shown to dramatically increase planning speed and reliability.

The layout of this paper is as follows.
In Section \ref{sec:motivation}, CBFs are used to enforce safety on both the kinematic model of the manipulator and the full dynamics.
Section \ref{sec:theory} formulates distance functions in complex, real-world environments, which are used in the context of CBFs for collision avoidance.
Section \ref{sec:implementation} outlines the software implementation of the proposed algorithm and the simulation environment.
Lastly, Section \ref{sec:results} shows the details and results of the extensive, real-world hardware tests in the application of robotic cooking.

\section{Control Barrier Functions for Safety}
\label{sec:motivation}
\subsection{Background: Control Barrier Functions}

Consider a nonlinear system in control-affine form:
\begin{align} \label{eqn:dyn}
    \dot{x} = f(x) + g(x) u,
\end{align}
with state $x \in \R^k$ and control input $u \in U \subset \R^m$ to be chosen from an admissible input set $\mathit{U} \subseteq 
\R^m$.
The functions $f: \R^k \to \R^k$ and $g: \R^k \to \R^{k \times m}$ describe the dynamics of the system and are assumed to be Lipschitz continuous.
Given a Lipschitz continuous control law $k:\R^k\to \R^m$, $u = k(x)$ we obtain the closed-loop dynamics
\begin{align} \label{eqn:cl_dyn}
    \dot{x} = f_{\rm cl}(x) := f(x) + g(x)k(x).
\end{align}
For the initial condition $x(t_0)=x_0 \in \R^k$, this system has a unique solution $x(t)$ which we assume to exist for all $t \geq t_0$.

Consider a safe subset of the state-space $\S \subset \R^k$ which may represent, for example, the collision-free states of a manipulator.
To guarantee safety, we must ensure that the state of the closed-loop system is kept within in $\S$ for all time. This is formalized through the notion of set invariance. 

\begin{defn}
The set $\mathit{S}$ is \textit{forward invariant} if the solution $x(t)$ of system \eqref{eqn:cl_dyn} satisfies $x(t) \in \mathit{S}$, $\forall t \geq t_0$. 
\end{defn}

Control barrier functions are a common tool to synthesize controllers that enforce forward invariance for a given set $\S$.

\begin{defn}[\hspace{-.1pt}\cite{ames2017cbf}]
\label{def:cbf}
Let $\S \subset \R^k$ be defined as the 0-superlevel set of a continuously differentiable function $h: \R^k \to \R$:
\begin{equation}
\S = \{ x \in \R^k ~ : ~ h(x) \geq 0 \}.
\label{eqn:safeset}
\end{equation}
Function $h$ is a \textit{control barrier function (CBF)} for \eqref{eqn:dyn} on $\S$ if there exists an \emph{extended class $\mathit{K}_{\infty}$ function}\footnote{$\alpha : \R \to \R$ is an extended class $\mathit{K}_{\infty}$ function if it is continuous, strictly monotonically increasing, and satisfies $\alpha(0) = 0$, $\lim_{r\to\infty}\alpha(r) = \infty$ and $\lim_{r\to-\infty}\alpha(r) = -\infty$.}
$\alpha$ such that for all 
$x \in \S$: 
\begin{align}
\label{eqn:cbf:definition}
\sup_{u \in U} 
\underbrace{
\left[
\frac{\partial h}{\partial x} f(x) + \frac{\partial h}{\partial x} g(x) u
\right]
}_{\dot{h}(x,u)}
\geq - \alpha(h(x)),
\end{align}
where $\dot{h}(x,u)$ is the derivative of $h(x)$ along system \eqref{eqn:dyn}.
\end{defn}

This definition yields the following key result for CBFs.

\begin{thm}[\hspace{-.1pt}\cite{ames2017cbf}]
\label{thm:CBF}
If $h$ is a CBF for \eqref{eqn:dyn}, then any locally Lipschitz continuous controller $k:\R^k\to \R^m$, $u = k(x)$ satisfying 
$$
\dot{h}(x,k(x)) \geq - \alpha(h(x)) 
$$
renders the set $\S$ in \eqref{eqn:safeset} forward invariant for the resulting closed loop system \eqref{eqn:cl_dyn}.  
\end{thm}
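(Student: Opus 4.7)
The plan is to reduce the set-invariance claim to a scalar differential inequality for $h$ along closed-loop trajectories, then close the argument via a comparison lemma for scalar ODEs---this is the standard route and I would follow it.

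First I would fix an initial condition $x_0 \in \S$, invoke existence and uniqueness of the closed-loop solution $x(t)$ on $[t_0,\infty)$ (guaranteed by local Lipschitz continuity of $f$, $g$, $k$), and define the one-dimensional surrogate $H(t) := h(x(t))$. Applying the chain rule and substituting $u = k(x(t))$ into $f+gu$ turns the hypothesis $\dot h(x,k(x)) \geq -\alpha(h(x))$ directly into the scalar differential inequality
\[
\dot H(t) \;\geq\; -\alpha\bigl(H(t)\bigr), \qquad H(t_0) \geq 0.
\]
This is the only place the definition of $\dot h$ and the chain rule enter; it is routine and I would not dwell on it.

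Next I would compare $H$ to the scalar ODE $\dot y = -\alpha(y)$ with $y(t_0) = H(t_0)$. Because $\alpha$ is continuous, strictly increasing, and vanishes at $0$, the origin is an equilibrium of this ODE and any solution starting non-negative must remain non-negative: if $y(t_0) = 0$ then $y \equiv 0$, and if $y(t_0) > 0$ then $\dot y < 0$ while $y > 0$, so $y$ is decreasing but cannot cross $0$ by continuity. A standard scalar comparison lemma applied to $\dot H \geq -\alpha(H)$ versus $\dot y = -\alpha(y)$ then yields $H(t) \geq y(t) \geq 0$ for all $t \geq t_0$, so $x(t) \in \S$ and forward invariance follows.

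The main obstacle is that $\alpha$ is only required to be continuous and strictly monotone, not locally Lipschitz, so uniqueness of solutions to the comparison ODE $\dot y = -\alpha(y)$ can fail near $y = 0$. I would handle this by working with a \emph{maximal} solution $y$ and exploiting the fact that the one-sided inequality $\dot H \geq -\alpha(H)$ only requires the \emph{existence} of some non-negative comparison trajectory lying beneath $H$, not a unique one; this is exactly the subtle point that the appeal to \cite{ames2017cbf} ultimately hides, and it is where I would be most careful in a formal write-up.
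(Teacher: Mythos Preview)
The paper does not prove Theorem~\ref{thm:CBF}; it simply quotes the result from \cite{ames2017cbf} and moves on, so there is no ``paper's own proof'' to compare against. Your sketch is the standard comparison-lemma argument one finds in the CBF literature and is essentially correct, including your identification of the only delicate point (non-Lipschitz $\alpha$ near zero and the need to work with a maximal lower solution rather than a unique one).
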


This condition can be incorporated into a quadratic program (QP) to synthesize pointwise optimal and safe controllers, by minimally modifying a desired but not necessarily safe input $u_{\rm des}(x,t) \in U$ to a safe input $u^*(x,t) \in U$:
\begin{align}
\label{eqn:QPsimple}
\begin{split}
u^*(x,t) = \underset{u \in \mathit{U}}{\operatorname{argmin}} & ~  ~ {\| u - u_{\rm des}(x,t) \|}_2^2 \\
\mathrm{s.t.} & ~  ~ \dot{h}(x,u) \geq - \alpha (h(x)).
\end{split}
\end{align}
This QP can be solved in real-time for nonlinear systems.

\subsection{Application to Robotic Manipulators}

Now let us use CBFs for controlling robotic manipulators whose state $x=(q,\dot{q})$ consists of the configuration $q \in \R^n$ and the joint velocities $\dot{q} \in \R^n$.
For safe obstacle avoidance with the manipulator, we consider the safe set to be defined over the configuration space:
\begin{equation}
\S = \{ q \in \R^n ~ : ~ h(q) \geq 0 \},
\label{eqn:safeset_robot}
\end{equation}
where $h : \R^n \to \R$ is continuously differentiable.
That is, $h$ is assumed to be independent of $\dot{q}$.
The specific choice of $h$ will be given in Section~\ref{sec:synthesis}.

First, we consider the kinematics of robotic manipulators with state $q$ --- later it will be formally justified how this yields safety guarantees on the full-order dynamics with state $(q,\dot{q})$. 
In particular, we consider the system:
\begin{equation}
\label{eqn:kinematic_model}
    \dot{q} = v,
\end{equation}
wherein we assume direct control over the joint velocities via the commanded velocity $v \in \R^n$. 
We design a velocity $v$ by considering it as input to system \eqref{eqn:kinematic_model} and guaranteeing safety by CBFs.
In Section~\ref{sec:kin2dyn}, it will be verified that safety guarantees extend to the full dynamics when the commanded velocity is tracked by a low-level controller.

Because each joint's velocity is directly controlled according to \eqref{eqn:kinematic_model}, we can simplify the QP shown in \eqref{eqn:QPsimple} to:
\begin{align}
\label{eqn:QPkinematic}
\begin{split}
v^*(q,t) = \underset{v \in \R^n}{\operatorname{argmin}} & ~  ~ {\| v - v_{\rm des}(q,t) \|}_2^2 \\
\mathrm{s.t.} & ~  ~ \frac{\partial h}{\partial q} v \geq - \alpha h(q),
\end{split}
\end{align}
where a desired velocity $v_{\rm des}(q,t) \in \R^n$ is modified to a safe velocity $v^*(q,t) \in \R^n$.
Note that we chose the extended class~$\mathit{K}_{\infty}$ function to be linear with constant gradient $\alpha > 0$.

\begin{example} \label{ex:simple}
Consider a 6-degrees-of-freedom manipulator ($n=6$) with a spherical tool attachment of radius $r_1$. The manipulator is intended to track a desired joint velocity $v_{\rm des}(q,t)$ and we wish to avoid a spherical region centered at $O \in \R^3$ of radius $r_2$.
The CBF can be written as the distance from the spherical tool to the sphere in the surroundings:
\begin{align}
    h(q) &= \norm{F(q) - O}_2 - (r_1+r_2)\\
    &=  \sqrt{(F_x -O_x)^2+(F_y -O_y)^2+(F_z -O_z)^2} - (r_1+r_2) \nonumber
\end{align}
where $F : \R^6 \rightarrow \R^3$ are the forward kinematics that give the position of the end-effector in space, $(F_x,F_y,F_z) = F(q)$.
The gradient of the CBF can be computed as:
\begin{equation}
    \frac{\partial h}{\partial q} = \frac{\partial h}{\partial F} \frac{\partial F}{\partial q} =
    \frac{1}{\norm{F(q) - O}_2}
    \begin{bmatrix}
    F_x-O_x \\ 
    F_y-O_y \\
    F_z-O_z
    \end{bmatrix}^T J(q),
\end{equation}
where $J : \R^6 \to \R^3 \times \R^6$, $J(q) = \frac{\partial F}{\partial q}$ is the top three rows of the manipulator Jacobian.
By enforcing the CBF-QP \eqref{eqn:QPkinematic}, we obtain the path illustrated in Figure~\ref{fig:simple_example}.
\end{example}

\begin{figure}[t]
\includegraphics[trim={0 2cm 0 3cm},clip,width=\columnwidth]{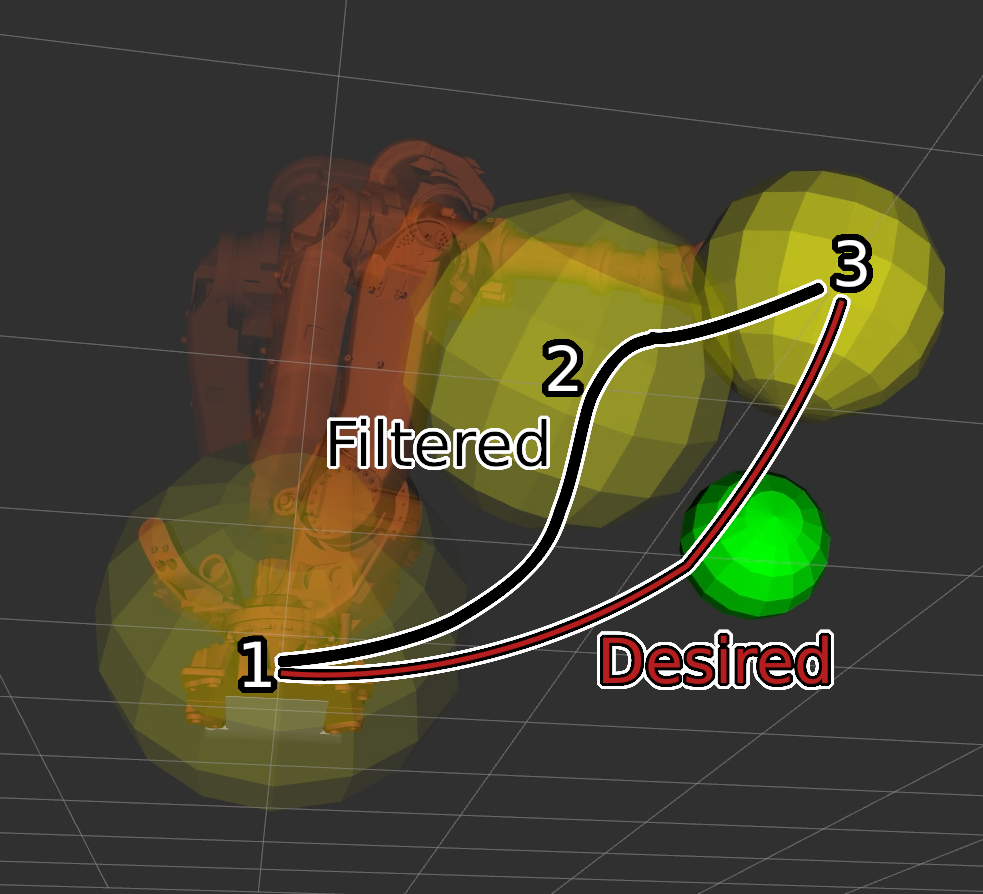} 
\caption{Manipulator trajectory resulting from the control barrier function detailed in Example \ref{ex:simple}. The tool is marked in yellow, whereas the obstacle is shown in green.}\label{fig:simple_example}
\end{figure}

\subsection{Safety Guarantees: from Kinematics to Dynamics}
\label{sec:kin2dyn}

We now establish the first theoretic contribution of the paper that will serve to formally justify the subsequent results.
In particular, we leverage the kinematic model of the manipulator to guarantee safe behavior on the full-order dynamics.
We establish that tracking the safe velocity obtained from the QP \eqref{eqn:QPkinematic} results in safety under reasonable conditions on the tracking controller. 


Specifically, consider the full-order dynamics associated with a robotic manipulator \cite{murray2017mathematical}: \begin{equation}
\label{eqn:robot}
        D(q) \dot{q} + C(q,\dot{q})\dot{q}+G(q) = Bu,
\end{equation}
with $q,\dot{q} \in \R^n$, $D(q) \in \R^{n \times n}$ the inertia matrix, $C(q,\dot{q}) \in \R^{n \times n}$ the Coriolis matrix, and $G(q) \in \R^n$ the gravity vector. Here we assume full actuation: the actuation matrix $B \in \R^{n \times n}$ is invertible and $u \in \R^n$.  Associated with these dynamics is a control system of the form \eqref{eqn:dyn} with $x = (q,\dot{q})$ (hence $k = 2n$). 

Motivated by the approach in \cite{molnar2021model}, we assume the existence of a ``good'' low-level velocity tracking controller on the manipulator (as is common on industrial robots).  Concretely, for a velocity command $v^*(q,t)$ consider the corresponding error in tracking this velocity: 
\begin{equation}
\dot{e} = \dot{q} - v^*,
\label{eqn:error}
\end{equation}
and assume exponentially stable tracking.

\begin{ass} \label{ass:tracking}
There exist a low-level controller $u = k(x,t)$ for the control system \eqref{eqn:dyn} obtained from \eqref{eqn:robot} such that
\begin{eqnarray}
\label{eqn:expconvergence}
{\| \dot{e}(t)\|}_2 \leq M e^{- \lambda t} {\| \dot{e}_0 \|}_2
\end{eqnarray}
holds for some $M,\lambda > 0$ along the solution $x(t)$ of the closed-loop system \eqref{eqn:cl_dyn} with $q(t_0)=q_0$, $\dot{q}(t_0)=\dot{q}_0$ and $\dot{e}(t_0)=\dot{e}_0$.
\end{ass}

Under this assumption, we have the first theoretic result of the paper which we state in general terms before applying it to the case of avoiding collisions in Section \ref{sec:fullordercollision}.

\begin{thm}
\label{thm:kintofull}
Consider the full-order dynamics of a robot manipulator \eqref{eqn:robot} expressed as the control system \eqref{eqn:dyn}, and the safe set $\S$ in \eqref{eqn:safeset_robot}.
Assume that $h$ has bounded gradient, i.e., there exists $C_h>0$ s.t. ${\left\| \frac{\partial h}{\partial q}\right\|}_2 \leq C_h$ for all $q \in \S$.
Let $v^*(q,t)$ be the safe velocity given by the QP \eqref{eqn:QPkinematic}, with corresponding error in \eqref{eqn:error}.
If Assumption~\ref{ass:tracking} holds with
$\lambda > \alpha$, safety is achieved for the full-order dynamics \eqref{eqn:robot} in that: 
\begin{eqnarray}
\label{eqn:SMsafety}
(q_0,\dot{e}_0) \in \S_M ~ \Rightarrow ~ q(t) \in S, \quad \forall t \geq t_0,
\end{eqnarray}
where:
\begin{equation}
\S_M  = \left\{ (q,\dot{e}) \in \R^{2n} ~ : ~ h(q) - \frac{C_h M}{\lambda - \alpha} \| \dot{e} \|_2 \geq 0 \right\}.
\end{equation}
\end{thm}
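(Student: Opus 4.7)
The plan is to bound $h(q(t))$ from below along the full-order closed-loop dynamics by deriving a scalar differential inequality, and then applying the comparison lemma together with the exponential tracking bound.

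First, I would differentiate $h$ along the closed-loop trajectory. Since $h$ depends only on $q$, we have $\dot{h} = \tfrac{\partial h}{\partial q} \dot{q}$, and writing $\dot{q} = v^*(q,t) + \dot{e}$ from \eqref{eqn:error} gives
\begin{equation*}
\dot{h} = \tfrac{\partial h}{\partial q} v^*(q,t) + \tfrac{\partial h}{\partial q} \dot{e}.
\end{equation*}
The QP constraint in \eqref{eqn:QPkinematic} guarantees $\tfrac{\partial h}{\partial q} v^*(q,t) \geq -\alpha h(q)$, while Cauchy--Schwarz combined with the bounded-gradient hypothesis yields $\tfrac{\partial h}{\partial q} \dot{e} \geq -C_h \|\dot{e}\|_2$. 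Hence
\begin{equation*}
\dot{h}(q(t)) \geq -\alpha h(q(t)) - C_h \|\dot{e}(t)\|_2.
\end{equation*}

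Second, I would substitute the exponential tracking bound from Assumption~\ref{ass:tracking} and invoke the comparison lemma on this first-order linear differential inequality. Explicitly computing the convolution yields
\begin{equation*}
h(q(t)) \geq e^{-\alpha(t-t_0)} h(q_0) - C_h M \|\dot{e}_0\|_2 \int_{t_0}^{t} e^{-\alpha(t-s)} e^{-\lambda(s-t_0)} \, ds.
\end{equation*}
Evaluating the integral (this is where the hypothesis $\lambda > \alpha$ is essential, as it guarantees the result is a proper difference of decaying exponentials rather than a polynomial blow-up) and rearranging gives the key decomposition
\begin{equation*}
h(q(t)) \geq e^{-\alpha(t-t_0)} \Bigl( h(q_0) - \tfrac{C_h M}{\lambda - \alpha} \|\dot{e}_0\|_2 \Bigr) + \tfrac{C_h M}{\lambda - \alpha} \|\dot{e}_0\|_2 \, e^{-\lambda(t-t_0)}.
\end{equation*}

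The conclusion then follows immediately: by the definition of $\S_M$, the initial-condition assumption $(q_0,\dot{e}_0) \in \S_M$ makes the bracketed quantity nonnegative, and since $\lambda > \alpha > 0$ both exponential prefactors are positive. Therefore $h(q(t)) \geq 0$ for all $t \geq t_0$, which is exactly $q(t) \in \S$. I expect the main technical obstacle to be the convolution integral and verifying that the $\lambda > \alpha$ gap condition is precisely what controls it; conceptually, one is trading a pure barrier condition on $h$ for a ``robustified'' barrier $h - \tfrac{C_h M}{\lambda-\alpha}\|\dot{e}\|_2$ whose level-set inclusion propagates through the error dynamics.
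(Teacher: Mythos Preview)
Your proposal is correct and follows essentially the same approach as the paper: both derive the differential inequality $\dot{h} \geq -\alpha h - C_h M\|\dot{e}_0\|_2 e^{-\lambda (t-t_0)}$ from the QP constraint, Cauchy--Schwarz, and the tracking bound, then invoke the comparison lemma. The only cosmetic difference is that the paper writes down the comparison solution $y(t)$ explicitly and verifies it satisfies the ODE, whereas you compute it via the variation-of-constants integral; the resulting lower bound and the use of $(q_0,\dot{e}_0)\in\S_M$ to conclude $h(q(t))\geq 0$ are identical.
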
 

\begin{proof}
First, we lower-bound $\dot{h}(q,\dot{q})$ as follows:
\begin{align}
\begin{split}
\dot{h}(q,\dot{q})
& = \frac{\partial h}{\partial q} v^* + \frac{\partial h}{\partial q} \dot{e} \\
& \geq -\alpha h(q) - {\left\| \frac{\partial h}{\partial q} \right\|}_2 {\| \dot{e} \|}_2 \\
& \geq -\alpha h(q) - C_{h} M {\|\dot{e}_0\|}_2 {\rm e}^{-\lambda t},
\end{split}
\end{align}
where we used (i) the definition \eqref{eqn:error} of the tracking error; (ii) the constraint on the safe velocity in \eqref{eqn:QPkinematic} and the Cauchy-Schwartz inequality; and (iii) the upper bound $C_{h}$ on ${\| \frac{\partial h}{\partial q} \|}_2$ and the upper bound \eqref{eqn:expconvergence} on the tracking error.
Then, consider the following continuous function ${y: \mathbb{R} \to \mathbb{R}}$:
\begin{equation}
y(t) = \left( h(q_0) - \frac{C_{h} M {\|\dot{e}_0\|}_2}{\lambda - \alpha} \right) {\rm e}^{-\alpha t} + \frac{C_{h} M {\|\dot{e}_0\|}_2}{\lambda - \alpha} {\rm e}^{-\lambda t},
\end{equation}
which satisfies:
\begin{align}
\begin{split}
\dot{y}(t) & = - \alpha y(t) - C_{h} M {\|\dot{e}_0\|}_2 {\rm e}^{-\lambda t} \\
y(t_0) & = h(q_0).
\end{split}
\end{align}
For ${(q_0,\dot{e}_0) \in S_{M}}$, we have ${y(t) \geq 0}$, ${\forall t \geq t_0}$, and by the comparison lemma we get:
\begin{equation}
h(q(t)) \geq y(t) \geq 0, \quad \forall t \geq t_0,
\end{equation}
that implies ${q(t) \in S}$, ${\forall t \geq t_0}$.
This completes the proof.
\end{proof}




\section{Distance Functions and Safety Filtering}
\label{sec:theory}
\subsection{Collisions with Environment}
In order to prevent collisions with the environment, we must ensure that any point on the robot does not come into contact with any point in the environment. However, unlike the simple example before, we cannot rely on the robot and environment being represented by simple spheres.

Let us denote the set of all points on the robot as $A \subset \R ^3$, and the set of all points in the collision environment as $B \subset \R ^3$. To guarantee safety,
we require that $A \cap B = \emptyset$, thus $\textrm{distance}(A,B) > 0$. More formally, \emph{distance} is defined as:
\begin{equation}
    \textrm{distance}(A,B) = \inf_{\substack{p_A \in A \\p_B \in B}} \norm{p_A-p_B}_2,
\end{equation}
which can be computed in $\R^3$ using the GJK algorithm \cite{gilbert1988fast}. 

This notion gives a nonnegative distance, which could be used as CBF.
However, it is advantageous to define a CBF that is negative in the event of collision, since CBFs may also ensure that the boundary of the set $\S$ is re-approached if $h(x) < 0$ \cite{ames2017cbf}.  
In collision, \emph{penetration} is defined as:
\begin{equation}
    \textrm{penetration}(A,B) = \inf_{\substack{p_A \in A \\p_B \in \overline{B}}} \norm{p_A-p_B}_2,
\end{equation}
where $\overline{B}$ is the complement of $B$, or the set of points outside the collision scene. Penetration is often computed using the EPA algorithm \cite{van2001proximity}.

These two functions can be combined to form the notion of \emph{signed distance}. 
Signed distance is typically written as 
\begin{equation}
    \mathrm{sd}(A,B) = \textrm{distance}(A,B) - \textrm{penetration}(A,B).
\end{equation}
When the points $p_A$ and $p_B$ of the robot and the environment are given in local coordinates, the following expression from \cite{schulman2014motion} can be utilized to compute the signed distance:
\begin{equation}
    \mathrm{sd}_{AB}(q) = \max_{\substack{\tilde{n} \in \R^3 \\ \norm{\tilde{n}}_2 = 1}}\min_{\substack{p_A \in A \\p_B \in B}}\tilde{n}\cdot \left(F_A^{\rm W}(q) p_A - F_B^{\rm W} p_B\right),
    \label{eqn:sd_maxmin}
\end{equation}
where $F_A^{\rm W}(q) \in \R^{3 \times 3}$ gives the pose of the robot in the world frame that depends on the configuration $q$, and $F_B^{\rm W} \in \R^{3 \times 3}$ gives the pose of the collision environment, i.e., $F_A^{\rm W}(q) p_A$ and $F_B^{\rm W} p_B$ indicate points in the world frame.

\subsection{Controller Synthesis with Control Barrier Functions}
\label{sec:synthesis}

Given the signed distance, we propose the CBF candidate: 
\begin{equation} \label{eq:cbf_sd}
    h(q) = \mathrm{sd}_{AB}(q),
\end{equation}
which defines the corresponding safe set of the system: 
\begin{equation} \label{eq:cbf_sd_set}
   \S = \{q \in \R^n ~ : ~  h(q) = \mathrm{sd}_{AB}(q) \geq 0\}. 
\end{equation}
We remark that based on \eqref{eqn:sd_maxmin} $h$ can be written as:
\begin{equation}
    h(q) = \hat{n}(q)^\top \left(F_A^{\rm W}(q) \hat{p}_A(q) - F_B^{\rm W} \hat{p}_B(q)\right).
\end{equation}
Here $\hat{n}(q)$ and $\hat{p}_A(q)$, $\hat{p}_B(q)$ denote the direction and points that maximize and minimize the expression in \eqref{eqn:sd_maxmin}, respectively, which depend on the configuration $q$.



It is important to note that in Euclidean space, signed distance, $h$, is differentiable almost everywhere, and satisfies
$\norm{\frac{\partial h}{\partial p_A}}_2 = 1$
\cite{sakai1996riemannian}.
There exists, however, a set of measure zero where $\frac{\partial h}{\partial q}$ is discontinuous, since  functions $\hat{n}$ and $\hat{p}_A$, $\hat{p}_B$ are nonsmooth due to the $\max$ and $\min$ operators in \eqref{eqn:sd_maxmin}.
Since the above framework requires continuously differentiable $h$, we take special care in applying the theory, and we handle nonsmoothness under the following construction.

First, we
express the gradient of $h$ as follows:
\begin{equation} \label{eq:cbf_sd_grad_d}
    \frac{\partial h}{\partial q} = \hat{n}(q)^\top J_{A}(q) + \delta(q),
\end{equation}
where $J_{A}(q) = \frac{\partial F_A^{\rm W}}{\partial q} \hat{p}_A(q)$ and $\delta(q)$ is the remainder term associated with the derivatives of $\hat{n}$, $\hat{p}_A$, and $\hat{p}_B$.
Importantly, note that
$\hat{n}(q)^\top J_{A}(q)$ is continuous, while $\delta(q)$ is discontinuous on a set of measure zero.
The term $\hat{n}(q)^\top J_{A}(q)$ can be interpreted as a continuous approximation of $\frac{\partial h}{\partial q}$,
while the approximation error $\delta(q)$ acts as disturbance.
The size of the disturbance is characterized by its essential supremum\footnote{The function $\delta$ is \textit{essentially bounded} if ${\Vert \delta(t)\Vert}_2$ is bounded by a finite number for almost all $t \geq t_0$ (i.e., ${\Vert \delta(t)\Vert}_2$ is bounded except on a set of measure zero). The quantity ${\Vert \delta \Vert}_\infty$ is then defined as the least such bound.}:
$$
\| \delta\|_{\infty} := \mathrm{ess} \sup_{\hspace{-.4cm} t \geq t_0} {\| \delta(q(t)) \|}_2. $$
The points where $h$ is not differentiable and $\delta$ is discontinuous occur on a set of measure zero, and therefore do not impact the essential supremum.

Now we incorporate the continuous approximation
$\hat{n}(q)^\top J_{A}(q)$
in \eqref{eq:cbf_sd_grad_d} into the control design.
The following result demonstrates that this approximation is sufficient to maintain safety if the disturbance $\delta(q)$ is properly accounted for (in an input-to-state safety (ISSf) context \cite{kolathaya2018input,alan2021safe}).

\begin{prop} \label{prop:issf}
Consider the kinematic model of a robotic manipulator \eqref{eqn:kinematic_model}.
Then, the controller expressed as the QP: 
\begin{align}
\label{eqn:QPkinematic2}
v^*(q,t) = \underset{v \in \mathbb{R}^n}{\operatorname{argmin}} & ~  ~ {\| v - v_{\rm des}(q,t) \|}_2^2 \\
\mathrm{s.t.} & ~  ~ \hat{n}(q)^\top J_{A}(q) v \geq - \alpha h(q) + 2 J_{\max}  \dot{q}_{\max} ,
\nonumber
\end{align}
with $\dot{q}_{\max} = \| \dot{q} \|_{\infty}$ and $J_{\max} = \max_{q \in \R^n} {\| J_{A}(q) \|}_2$,
renders the set $\S$ in \eqref{eq:cbf_sd_set} forward invariant for the resulting closed-loop system.
That is, the controller \eqref{eqn:QPkinematic2} keeps system \eqref{eqn:kinematic_model} safe.
\end{prop}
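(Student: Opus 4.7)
My plan is to treat Proposition~\ref{prop:issf} as an input-to-state safety variant of Theorem~\ref{thm:CBF} applied to the kinematic model $\dot{q} = v$. The QP in \eqref{eqn:QPkinematic2} uses only the continuous part $\hat{n}(q)^\top J_A(q)$ of the CBF gradient from \eqref{eq:cbf_sd_grad_d}, while the discontinuous residual $\delta(q)$ plays the role of a matched, bounded disturbance whose worst-case contribution the additive slack $+\,2 J_{\max}\dot{q}_{\max}$ is sized to absorb.

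The first step is to establish the pointwise bound $\|\delta(q)\|_2 \leq 2 J_{\max}$ wherever $h$ is differentiable. By the chain rule and the fact that signed distance satisfies $\|\partial h/\partial p_A\|_2 = 1$, we get $\|\partial h/\partial q\|_2 \leq \|J_A(q)\|_2 \leq J_{\max}$. Since $\hat{n}$ is a unit vector, the induced norm bound $\|\hat{n}^\top J_A\|_2 \leq J_{\max}$ also holds, so the triangle inequality applied to \eqref{eq:cbf_sd_grad_d} gives the desired estimate on $\delta$.

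The second step is to differentiate $h$ along the closed-loop trajectory at any time where $h$ is differentiable. Substituting $\dot{q} = v^*$, invoking the QP constraint in \eqref{eqn:QPkinematic2}, and applying Cauchy--Schwarz yield
\begin{align*}
\dot{h}(q,\dot{q})
&= \hat{n}(q)^\top J_A(q)\, v^{*} + \delta(q)\, v^{*} \\
&\geq -\alpha\, h(q) + 2 J_{\max}\dot{q}_{\max} - \|\delta(q)\|_2\,\|v^{*}\|_2 \\
&\geq -\alpha\, h(q),
\end{align*}
where the final step uses the just-obtained bound on $\delta$ together with the definition of $\dot{q}_{\max}$ so that $\|v^{*}\|_2 = \|\dot{q}\|_2 \leq \dot{q}_{\max}$; the two slack terms cancel exactly.

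The hard part will be upgrading this almost-everywhere differential inequality to genuine forward invariance of $\S$, because the non-differentiability set of $h$ is only known to have Lebesgue measure zero. I would argue that $q(t)$ is locally Lipschitz (it is absolutely continuous with bounded $\dot q$) and that $h$ is Lipschitz in $q$ (as a signed distance composed with bounded-Jacobian forward kinematics), so $t \mapsto h(q(t))$ is absolutely continuous and the measure-zero bad set in configuration space pulls back to a measure-zero time set along the trajectory. This is exactly the measure-theoretic content already encoded by the essential-supremum convention the paper uses for $\delta$. Hence the inequality $\dot{h}(q,\dot q) \geq -\alpha h(q)$ holds for a.e.\ $t$, and the comparison lemma applied to $\dot{y} = -\alpha y$ with $y(t_0) = h(q_0) \geq 0$ yields $h(q(t)) \geq h(q_0)\, e^{-\alpha (t-t_0)} \geq 0$ for all $t \geq t_0$, establishing forward invariance of $\S$.
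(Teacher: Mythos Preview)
Your argument is correct and mirrors the paper's proof almost line for line: bound $\|\delta(q)\|_2\le 2J_{\max}$ via $\|\partial h/\partial q\|_2\le J_{\max}$ and $\|\hat n^\top J_A\|_2\le J_{\max}$, then combine the QP constraint with Cauchy--Schwarz to obtain $\dot h\ge-\alpha h$ and conclude forward invariance. The only difference is cosmetic: the paper packages the disturbance bound as an essential-supremum $\|\delta\|_\infty$ and then invokes Theorem~\ref{thm:CBF} directly, whereas you keep the pointwise bound and spell out the absolute-continuity/comparison-lemma step to handle the measure-zero nondifferentiability set; this is in fact a slightly more careful closure of the same argument.
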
 


As such, collision-free behavior is enforced for the kinematic model of the manipulator, if the disturbance, i.e., the approximation error in \eqref{eq:cbf_sd_grad_d}, is accounted for in the controller.
This is achieved by the last term in the constraint of \eqref{eqn:QPkinematic2}. 

\begin{proof}

First, we bound the essential supremum $\| \delta\|_{\infty}$ of the disturbance.
Recall that the points where $h$ is not differentiable are on a set of measure zero and do not impact the essential supremum, thus we construct the bound on $\| \delta\|_{\infty}$ by picking generic points where the $h$ is differentiable. 
For an arbitrary point on the robot $p_A \in A$ where $h$ is differentiable:
\begin{align}
\begin{split}
    \norm{\frac{\partial h}{\partial q}}_2 &= \norm{\frac{\partial h}{\partial p_A} \frac{\partial p_A}{\partial q}}_2\\
     &\leq \norm{\frac{\partial h}{\partial p_A}}_2 \norm{\frac{\partial p_A}{\partial q}}_2 \\
     &\leq 1 \cdot J_{\max}.
\end{split}
\label{eqn:dhdq_bound}
\end{align}
This leads to the bound:
\begin{align}
\begin{split}
    \norm{\delta}_\infty &= \norm{\frac{\partial h }{\partial q} - \hat{n}(q)^\top J_{A}(q)}_\infty  \\
    &\leq  \norm{\frac{\partial h }{\partial q} - \hat{n}(q)^\top J_{A}(q)}_2  \\
    &\leq \norm{\frac{\partial h }{\partial q}}_2  + \norm{\hat{n}(q)^\top J_{A}(q)}_2 \\
    &\leq J_{\max}  + \norm{J_{A}(q)}_2 \\
    &\leq 2 J_{\max}.
\end{split}
\end{align}

Then, we differentiate the CBF $h$ in \eqref{eq:cbf_sd} and use \eqref{eq:cbf_sd_grad_d}:
\begin{align}
\begin{split}
\dot{h}(q,\dot{q}) &  = \frac{\partial h}{\partial q} \dot{q}  = \hat{n}(q)^\top J_{A}(q) \dot{q} + \delta(q) \dot{q} \\
& \geq \hat{n}(q)^\top J_{A}(q) \dot{q} - \| \delta \|_{\infty} \dot{q}_{\max}.
\end{split}
\label{eqn:hlowrbound}
\end{align}
Substituting $\dot{q}$ with the solution $v^*(q,t)$ to \eqref{eqn:QPkinematic2} and incorporating the bound on $\| \delta\|_{\infty}$, the result is: 
\begin{align}
\begin{split}
\dot{h}(q,v^*(q,t)) & \geq 
\hat{n}(q)^\top J_{A}(q) v^*(q,t) - \| \delta \|_{\infty} \dot{q}_{\max} \\
& \geq -\alpha h(q) + 2 J_{\max} \dot{q}_{\max} -  \| \delta \|_{\infty} \dot{q}_{\max} \\
& \geq -\alpha h(q).
\end{split}
\label{eqn:sd_invariance}
\end{align}
Thus, the set $\S$ is forward invariant based on Theorem \ref{thm:CBF}.
\end{proof}

\subsection{Self-collisions}
Self-collisions are defined as collisions between any two links of the robot that are not explicitly allowed to collide. For these types of collisions, we still use the signed distance function, but now $F_B^{\rm W}$ also depends on the configuration $q$:
\begin{equation}
    \mathrm{sd}_{AB}(q) = \max_{\substack{\tilde{n} \in \R^3 \\ \norm{\tilde{n}}_2 = 1}}\min_{\substack{p_A \in A \\p_B \in B}}\tilde{n}\cdot \left(F_A^{\rm W}(q) p_A - F_B^{\rm W}(q) p_B\right).
\end{equation}
Thus, the gradient of $h(q)=\mathrm{sd}_{AB}(q)$ becomes:
\begin{equation} \label{eq:cbf_sd_grad_self}
    \frac{\partial h}{\partial q} = \hat{n}(q)^\top \left( J_{A}(q)-J_{B}(q) \right)  + \delta(q),
\end{equation}
with $J_{A}(q) = \frac{\partial F_A^{\rm W}}{\partial q} \hat{p}_A(q)$ and $J_{B}(q) = \frac{\partial F_B^{\rm W}}{\partial q} \hat{p}_B(q)$.

Proposition~\ref{prop:issf} can again be applied to self-collisions, with slight modifications. The analysis results in the QP:
\begin{align}
\label{eqn:QPkinematic3}
v^*(q,t) &= \underset{v \in \mathbb{R}^n}{\operatorname{argmin}} ~  ~ {\| v - v_{\rm des}(x,t) \|}_2^2 \\
\mathrm{s.t.} & ~  ~ \hat{n}(q)^\top \left( J_{A}(q)-J_{B}(q) \right) v \geq - \alpha h(q) + 4 J_{\max} \dot{q}_{\max} . \nonumber
\end{align}

\subsection{Safety Guarantees for the Full-Order Dynamics}
\label{sec:fullordercollision}

The safety guarantees of Proposition \ref{prop:issf} are valid for the kinematic model \eqref{eqn:kinematic_model}.
However, like in Theorem \ref{thm:kintofull}, the controllers \eqref{eqn:QPkinematic2} and \eqref{eqn:QPkinematic3}
lead to collision-free motion also on the full-order dynamics---assuming good velocity tracking. 

\begin{thm}
Consider the full-order dynamics of a robot manipulator \eqref{eqn:robot} expressed as the control system \eqref{eqn:dyn}, and the safe set $\S$ in \eqref{eq:cbf_sd_set} associated with the signed distance $\mathrm{sd}_{AB}(q)$ between the robot and the environment in \eqref{eqn:sd_maxmin}.
Let $v^*(q,t)$ be the safe velocity given by the QP \eqref{eqn:QPkinematic2}, with corresponding error in \eqref{eqn:error}.
If Assumption~\ref{ass:tracking} holds with
$\lambda > \alpha$, safety is achieved for the full-order dynamics \eqref{eqn:robot} in that: 
\begin{eqnarray}
(q_0,\dot{e}_0) \in \S_M ~ \Rightarrow ~ q(t) \in S, \quad \forall t \geq t_0,
\end{eqnarray}
where:
\begin{equation}
\S_M  = \left\{ (q,\dot{e}) \in \R^{2n} ~ : ~ \mathrm{sd}_{AB}(q) - \frac{J_{\max} M}{\lambda - \alpha} \| \dot{e} \|_2 \geq 0 \right\}.
\end{equation}
\end{thm}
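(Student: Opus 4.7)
The plan is to synthesize the arguments of Theorem~\ref{thm:kintofull} and Proposition~\ref{prop:issf}: the former shows how a good velocity tracker lifts a kinematic safety guarantee to the full-order dynamics, while the latter shows how the disturbance compensation term $2J_{\max}\dot{q}_{\max}$ in \eqref{eqn:QPkinematic2} neutralizes the nonsmooth approximation error $\delta(q)$ in the gradient of the signed-distance function. I would combine these two mechanisms in a single lower bound on $\dot{h}(q,\dot{q})$ along the closed-loop trajectory of \eqref{eqn:robot}.

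Concretely, I would start by writing $\dot{q}=v^*+\dot{e}$ using the error definition \eqref{eqn:error}, and expand
\begin{equation*}
\dot{h}(q,\dot{q})=\tfrac{\partial h}{\partial q}\dot{q}=\hat{n}(q)^{\!\top}\!J_A(q)\,v^*+\hat{n}(q)^{\!\top}\!J_A(q)\,\dot{e}+\delta(q)\,\dot{q},
\end{equation*}
using the decomposition \eqref{eq:cbf_sd_grad_d}. The first term is lower-bounded by $-\alpha h(q)+2J_{\max}\dot{q}_{\max}$ via the QP constraint in \eqref{eqn:QPkinematic2}; the last term is bounded below by $-\|\delta\|_\infty\dot{q}_{\max}\geq -2J_{\max}\dot{q}_{\max}$ using the bound on $\|\delta\|_\infty$ already established in the proof of Proposition~\ref{prop:issf}; these two cancel exactly, just as in \eqref{eqn:sd_invariance}. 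What remains is the tracking-error term $\hat{n}(q)^{\!\top}\!J_A(q)\,\dot{e}$, which by Cauchy–Schwarz and $\|\hat{n}\|_2=1$ is bounded by $J_{\max}\|\dot{e}\|_2$; then Assumption~\ref{ass:tracking} gives
\begin{equation*}
\dot{h}(q,\dot{q})\geq -\alpha h(q)-J_{\max}M\|\dot{e}_0\|_2\,{\rm e}^{-\lambda t}.
\end{equation*}

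From here I would replay the comparison-lemma step of Theorem~\ref{thm:kintofull} verbatim, with $C_h$ replaced by $J_{\max}$: define the auxiliary function
\begin{equation*}
y(t)=\!\left(h(q_0)-\tfrac{J_{\max}M\|\dot{e}_0\|_2}{\lambda-\alpha}\right)\!{\rm e}^{-\alpha t}+\tfrac{J_{\max}M\|\dot{e}_0\|_2}{\lambda-\alpha}{\rm e}^{-\lambda t},
\end{equation*}
verify it satisfies $\dot{y}=-\alpha y-J_{\max}M\|\dot{e}_0\|_2{\rm e}^{-\lambda t}$ with $y(t_0)=h(q_0)$, observe that $(q_0,\dot{e}_0)\in\S_M$ is exactly the condition ensuring the first coefficient is nonnegative (so $y(t)\geq 0$ for all $t\geq t_0$ because $\lambda>\alpha$), and conclude $h(q(t))\geq y(t)\geq 0$ by comparison.

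The main obstacle is bookkeeping rather than any deep new idea: one must be careful that the disturbance compensation in the QP is expressed in terms of $\dot{q}_{\max}=\|\dot{q}\|_\infty$, i.e., the \emph{actual} joint velocity rather than the commanded one, so that it cancels $\delta(q)\dot{q}$ and not $\delta(q)v^*$. Once this is done, the tracking error $\dot{e}$ appears only through the smooth part $\hat{n}^{\!\top}\!J_A$ of the gradient, allowing the clean Cauchy–Schwarz bound and the reuse of the comparison-lemma argument from Theorem~\ref{thm:kintofull} without modification.
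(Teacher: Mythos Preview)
Your proposal is correct and follows the same route as the paper, which simply states that the argument of Theorem~\ref{thm:kintofull} applies with $C_h=J_{\max}$ (justified by \eqref{eqn:dhdq_bound}) together with $\frac{\partial h}{\partial q}v^*\geq -\alpha h(q)$ from \eqref{eqn:sd_invariance}. Your explicit grouping of the $\delta(q)\dot{q}$ term---keeping the actual joint velocity intact so that the $2J_{\max}\dot{q}_{\max}$ compensation cancels it exactly---is a cleaner way to make this precise than the paper's terse sketch, but the underlying mechanism is identical.
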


Note that the same safety guarantees can be stated for self-collision avoidance with the QP \eqref{eqn:QPkinematic3}.

\begin{proof}
The proof follows the same steps as in the Proof of Theorem~\ref{thm:kintofull} with the substitution $C_h = J_{\max}$, which is justified by ${\left\| \frac{\partial h}{\partial q}\right\|}_2 \leq J_{\max}$ based on \eqref{eqn:dhdq_bound}.
Furthermore, note that $\frac{\partial h}{\partial q} v^* \geq -\alpha h(q)$ still holds due to \eqref{eqn:sd_invariance}.
\end{proof}





\section{Software Implementation and Simulation}
\label{sec:implementation}
\subsection{CBF Implementation on Precomputed Trajectories}

Assuming the knowledge of a reference trajectory, we now detail the trajectory \emph{safety filter} algorithm. The most straightforward implementation of the QPs \eqref{eqn:QPkinematic2} and \eqref{eqn:QPkinematic3} is to run them in real-time paired with a desired joint velocity controller, which tracks the waypoints of the reference. This can be achieved with a P controller to the next waypoint $i$:
\begin{equation} \label{eq:simplePController}
    v_{\textrm{des}}(q,t) =
    K_P(q^i_{\textrm{des}}-q).
\end{equation}
For the best results, the error on joint positions should be heavily saturated to avoid large differences in desired velocities at short and long distances. The tracked waypoint is iterated forwards either when the robot is sufficiently close $\left( \norm{q^i_{\textrm{des}}-q}_2 < \epsilon \right)$, or when the robot gets stuck. 

Due to the large time delay that many industrial manipulators have, it is often desired to instead send precomputed time-stamped trajectories, rather than attempting to track a trajectory online with feedback. The basic algorithm for generating these safe trajectories, given a cache of previously computed reference trajectories, is detailed in Algorithm \ref{alg:filter}. 

\begin{algorithm}
\caption{Trajectory generation in modified collision environments with safety filters.}\label{alg:filter}
\begin{algorithmic}
\Require $C$, the cache that contains behaviors $C^i_B$, planning scenes $C^i_P$, and trajectories $C^i_X$
\Input
    \Desc{$B$}{Desired behavior}
    \Desc{$P$}{Planning Scene}
    \Desc{$q$}{Robot State}
\EndInput
\Output
    \Desc{$X$}{Trajectory}
\EndOutput

\ForEach {$C^i$ s.t. $B == C^i_B$} \Comment{Iterate through cache}
    \State $T^i = f(C^i_P,C^i_{X_0}, P, q)$  \Comment{Compute suitability metric}
\If{$T^i < T_1$} \Comment{Reference is extremely similar}
    \State $X \gets$ CBF($C^i_X,P,q$)
    \State \Return
\EndIf
\EndFor
\State [$T_{\min}$, idx]  $ \gets \min(T^i)$ \Comment{Find best reference}
\If{$T_{\min} < T_2$} \Comment{Close match}
    \State $X \gets$ CBF($C^{\textrm{idx}}_X,P,q$) \Comment{Safety filter}
    \State \Return
\ElsIf{$T_{\min} < T_3$} \Comment{Suitable match}
    \State $X \gets$ CBF($C^{\textrm{idx}}_X,P,q$)
    \State $C \gets X$ 
    \State \Return
\Else \Comment{Best reference is very dissimilar}
    \State $X \gets$ Re-plan from scratch
    \State $C \gets X$ \Comment{$X$ gets added to cache}
\EndIf
\end{algorithmic}
\end{algorithm}



There are three fields of interest in the cached trajectories: the desired behavior $B$, the manipulator's trajectory $T$, and the collision environment used by the original planner, referred to as the planning scene $P$. While only the joint trajectory is required to generate the modified, safe trajectory, the inclusion of the original planning scene allows for more information when choosing the closest trajectory to track.

The algorithm first assesses the suitability of previously computed trajectories in the cache. There are two major considerations: the difference in initial conditions and the similarity of the planning scene. The suitability of the $i^{\textrm{th}}$ member of the cache $C^i$ is evaluated by the function:
\begin{equation}
    T^i = f(C^i_P,C^i_{X_0}, P, q) = \delta^i_{q} + \delta^i_{P},
\end{equation}
where
\begin{align}
    \delta^i_{q} &= \norm{C^i_{X_0}-q}_2\\
    \delta^i_{P} &= \norm{C^i_{P}-P} = \sum_{o \in O}\norm{C^i_{P_o} - P_o}
\end{align}
assess the differences in the initial conditions of the robot and the collision objects $o \in O$ making up the planning scene.

There are three threshold values ($T_1$, $T_2$ and $T_3$) for this suitability metric. If $T^i < T_1$, then the search stops, as the trajectory in the cache is so close that it is not worth searching, and the CBF filter is applied. After searching through all cache members, if $T^i < T_2$, then the filter is applied, but the trajectory is not added to the cache to prevent it from growing unnecessarily large. If $T_2 < T^i < T_3$, then the filter is applied and the resulting trajectory is added to the cache. Finally, if $T^i > T_3$, then the original motion planning algorithm is used, and the result is added to the cache. 

To obtain the joint trajectory $X$ via the CBF, we simply utilize a trajectory tracking controller like \eqref{eq:simplePController} along with the CBF-QP, and integrate its solution throughout the behavior.

\subsection{Software Implementation and Simulation}

Figure~\ref{fig:cooking_env} shows the simulated cooking environment.
The robot and obstacle representations are a series of meshes described by URDF and SRDF files.
The position and orientation of objects are updated before each planning attempt, and collision objects in the environment are assumed to be stationary unless directly interacted with by the manipulator, such as the baskets being grabbed and moved. 

\begin{figure}[t]
\includegraphics[width=\columnwidth]{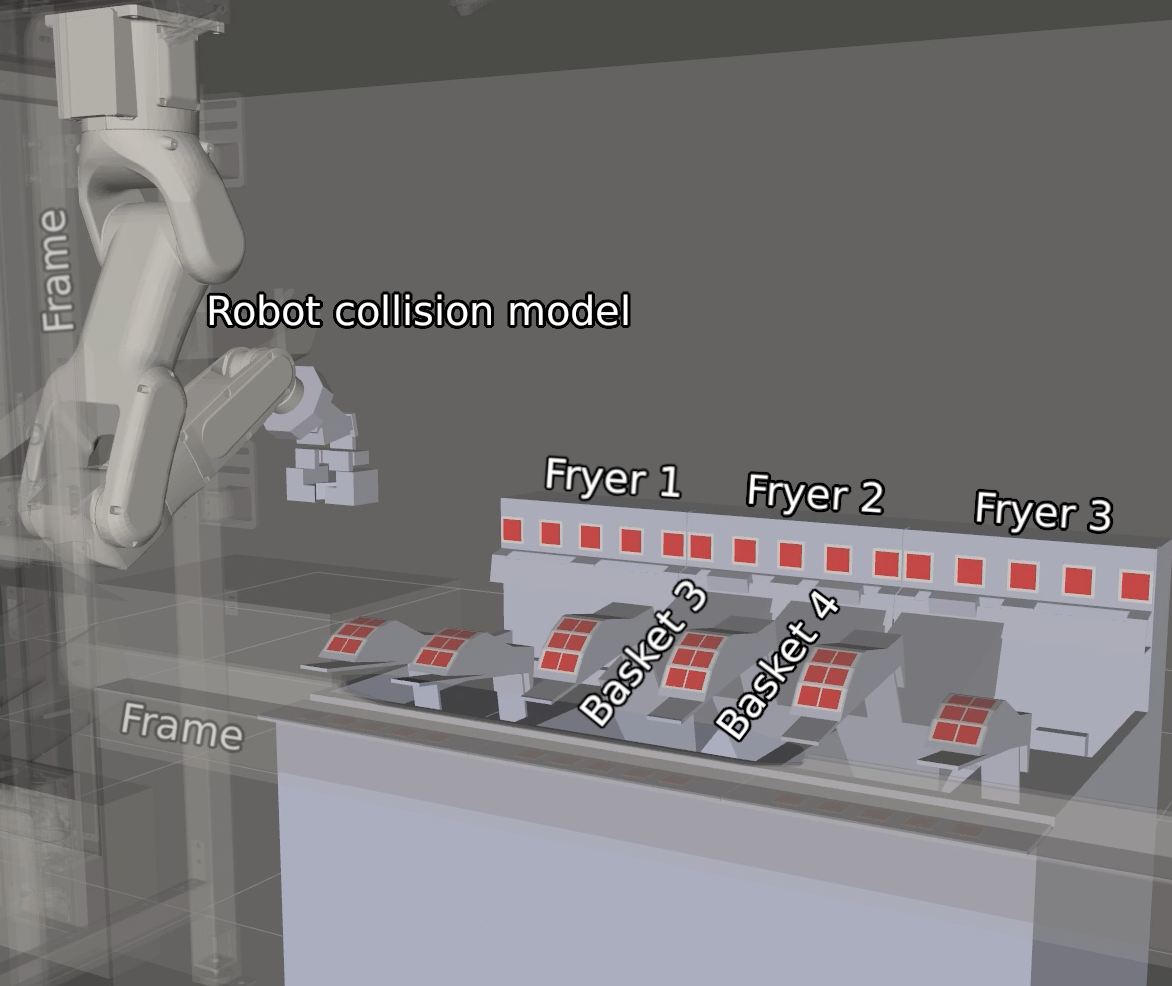}
\caption{The simulation environment, which shows the collision objects and their representations as mesh files. The same mesh representations are used on the hardware system.} \label{fig:cooking_env}
\end{figure}

To implement the CBF filter, we require three values to be computed: the signed distance to the obstacles and other links $\textrm{sd}(q)$, the normal vectors corresponding to these points $\hat{n}(q)$, and the manipulator Jacobian at these points $J(q)$. 
The MoveIt framework \cite{coleman2014reducing}, an open-source robotics software package for motion planning, is able to compute all three of these values. Specifically, the \texttt{distanceRobot()} and \texttt{distanceSelf()} functions of the \texttt{CollisionEnv} class provide the signed distances and normal vectors needed for environmental and self-collisions. Moreover, the \texttt{getJacobian()} function in the \texttt{RobotState} class returns the manipulator Jacobian. Thus, no other external libraries are required to implement this algorithm.
Once these three values are computed, the OSQP quadratic program solver \cite{stellato2020osqp} is used to calculate the velocity commands subject to the CBF condition, and integration is done manually.

Before hardware implementation, the algorithm was tested in simulation. The resulting behaviors are described in the next section, and the simulation results are shown along with the hardware trajectories in Figure \ref{fig:hardware_results}.

\section{Hardware Results}
\label{sec:results}
\subsection{Experimental testing environment}

\begin{figure*}[t]
    \centering
    \begin{subfigure}{1\textwidth}
    \includegraphics[width=1\columnwidth]{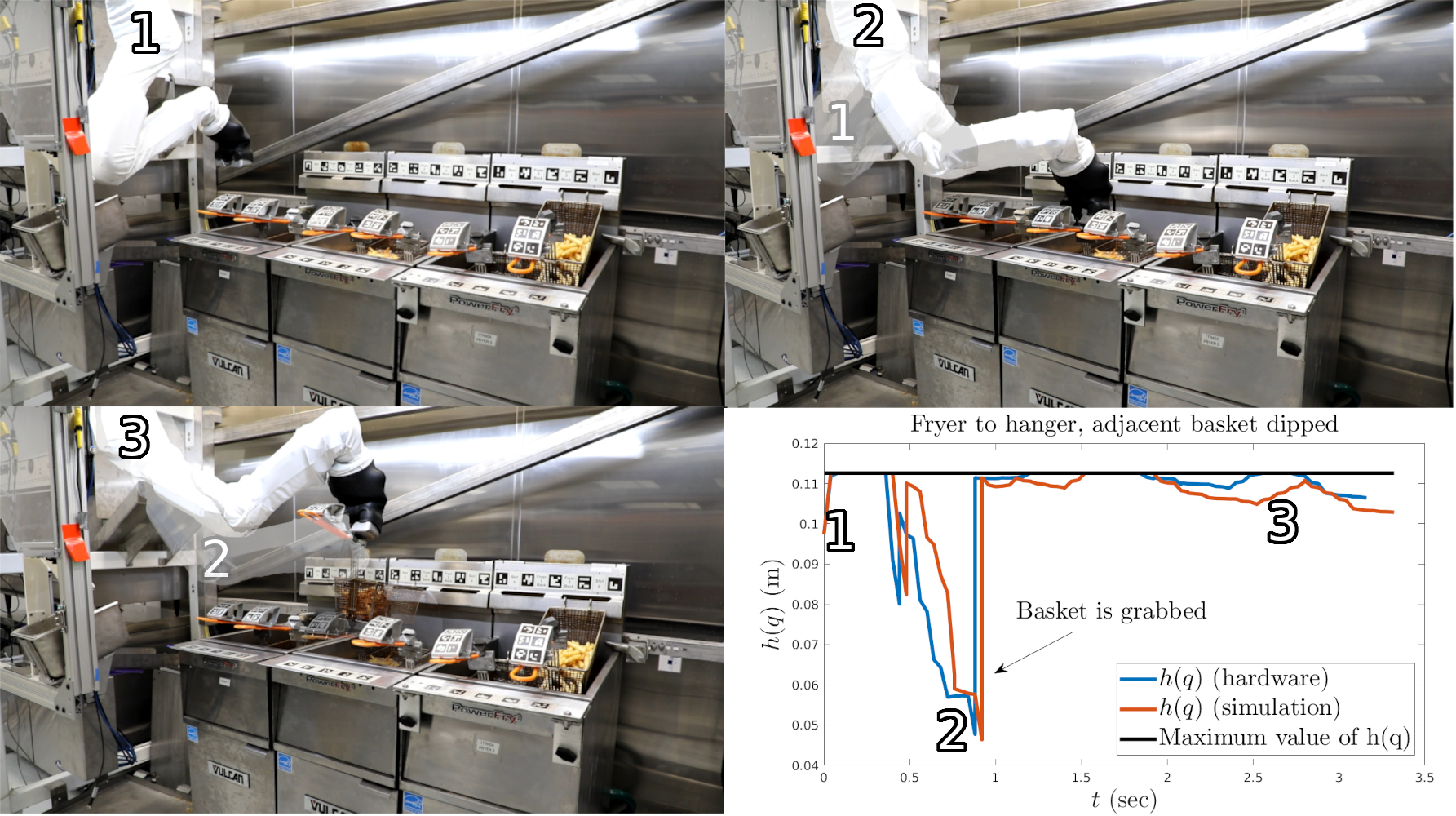}
    \caption{\texttt{\texttt{fryer\_to\_hanger}} with adjacent basket in fryer.}
    \end{subfigure}
    \vspace{.1cm}
    \begin{subfigure}{1\textwidth}
    \includegraphics[width=1\columnwidth]{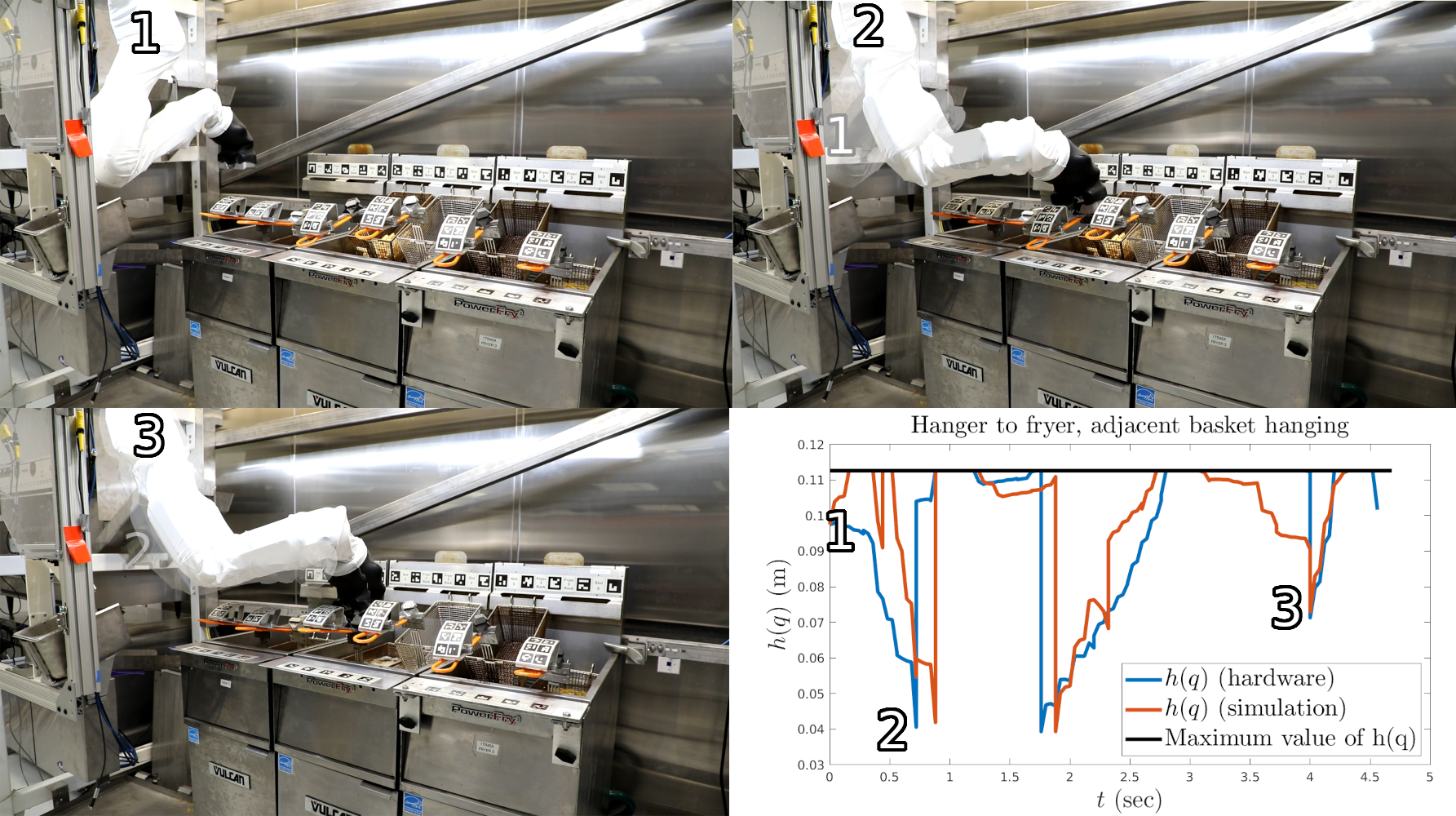}
    \caption{\texttt{\texttt{hanger\_to\_fryer}} with adjacent basket hanging.}
    \end{subfigure}
    \caption{Two examples behaviors implemented on the Flippy2 robot. See \url{https://youtu.be/nmkbya8XBmw} for video. The large spikes in signed distance $h(q)$ come from enabling and disabling collision objects when required for interaction, like the basket when gripping and the fryer when hanging. At the maximum value of $h(q)$, the robot is only 11 cm away from the frame around it during these behaviors.}\label{fig:hardware_results}
\end{figure*}

We apply the approach described in this paper to one of the Miso Robotics robotic cooking environments. Specifically, we utilize a FANUC LR Mate 200iD/7LC robotic manipulator wrapped in a sleeve, and we send joint trajectories from an Intel i9-9900KF running ROS.

The cooking environment used in the testing is fully modeled using high-quality meshes used for collision checking. There are 36 collision objects in total, each represented by tens to hundreds of mesh triangles. The primary collision objects of concern are the six baskets, three industrial fryers, the hood vent over the fryers, and the glass pane separating the manipulator from the human workers. Of these objects, the baskets and fryers are the most commonly displaced. 

As shown in the figures, the configuration space of the manipulator is very densely crowded with obstacles. To complete a behavior, it is common to have less than a few centimeters of clearance between the robot and the surrounding environment. For this reason, planning methods must be minimally conservative, and there is no room for any collision buffer. 

For the purpose of the experiments, a minimal cache was utilized to highlight the role of CBFs in re-planning around obstacles. In a commercial setting, with a more populated cache, the CBF would have many more prior trajectories to choose from, meaning that the path modifications would be much smaller in magnitude. In practice, we find that the cache size saturates at around 200 stored behaviors.

\subsection{Hardware results}

We test our framework's ability to safely re-plan on the two most volatile behaviors: \texttt{fryer\_to\_hanger} and \texttt{hanger\_to\_fryer}, described below.

\noindent \textbf{Fryer to hanger.} The \texttt{fryer\_to\_hanger} behavior moves a basket from the dipped state to the hanging state. The manipulator picks up a basket that has finished cooking and hangs it, allowing the oil to drip off the basket before serving food to customers.

\noindent \textbf{Hanger to Fryer.} The \texttt{hanger\_to\_fryer} behavior is the reverse of \texttt{fryer\_to\_hanger}, transitioning a basket from the hanging state to the frying state.

Each behavior is tested in two primary configurations: one where the adjacent basket is submerged, and one where it's hanging. For the purpose of this paper, each of the four testing configurations were run 25 times, each with different cached trajectories and planning environments, for 100 total executions. The testing methodology was simple: for each setup, we first run the CBF on the best matching reference trajectory in the limited cache, and then we re-plan using TrajOpt for comparison purposes. 

\textit{The CBF was able to produce a successful, collision-free trajectory in all 100 cases}, even with the artificially limited cache size. The average computation time per CBF call was 2 ms, and the average computation time for the entire behavior was 223 ms. This is a significant improvement compared to TrajOpt's average computation time of 5923. Note that the CBF's trajectory is updated every 10 ms compared to TrajOpt's 64 ms, meaning no additional local planner needs to be utilized.
Two example trajectories from the CBF are visualized in Figure \ref{fig:hardware_results}, and the value of $h(q)$ throughout the motion is included. 

\section{Conclusion}
\label{sec:conclusion}
In this work, we showcased control barrier functions for utilization in complex, real-world collision environments in the case of robotic cooking applications. First, we demonstrated how CBFs applied to the kinematics of robotic manipulators guarantee safety for the full-order dynamics. Then, we described the construction of these CBFs for very complex collision obstacle representations. We proposed an algorithm for filtering reference trajectories via CBFs to achieve safety, and we demonstrated the capabilities of this method in software and on hardware in the real-world application of frying foods.

\bibliographystyle{IEEEtran}
\bibliography{refs.bib}

\end{document}